\title[Distributionally Robust Learning in Survival Analysis]{Distributionally Robust Learning in Survival Analysis}
\author{%
\Name{Yeping Jin}\Email{yepjin@bu.edu}\\
\addr Boston University, USA
\AND
\Name{Lauren Wise} \Email{lwise@bu.edu}\\
\addr Boston University, USA
\AND
\Name{Ioannis Ch. Paschalidis} \Email{yannisp@bu.edu}\\
\addr Boston University, USA
}
\newcommand{\be}[1]{\begin{equation}\label{#1}}
\newcommand{\benon}{\begin{equation*}}  
\newcommand{\bemuln}[1]{\begin{multline}\label{#1}}
\newcommand{\bemul}{\begin{multline*}}
\newcommand{\bee}{\begin{eqnarray*}}
\newcommand{\eee}{\end{eqnarray*}}
\newcommand{\been}[1]{\begin{eqnarray}\label{#1}}
\newcommand{\eeen}{\end{eqnarray}}
\newcommand{\began}[1]{\begin{gather}\label{#1}}
\newcommand{\bega}{\begin{gather*}}
\newcommand{\bealn}[1]{\begin{align}\label{#1}}
\newcommand{\beal}{\begin{align*}}
\newcommand{\bealatn}[2]{\begin{alignat}{#1}\label{#2}}
\newcommand{\bealat}{\begin{alignat*}}
\newcommand{\bexalatn}[1]{\begin{xalignat}\label{#1}}
\newcommand{\bexalat}{\begin{xalignat*}}
\newcommand{\lb}{\llbracket}
\newcommand{\rb}{\rrbracket}
\def\br{{\mathbf r}}
\def\bv{{\mathbf v}}
\def\bx{{\mathbf x}}  
\def\by{{\mathbf y}}
\def\bz{{\mathbf z}}
\def\texitem#1{\par\smallskip\noindent\hangindent 25pt
               \hbox to 25pt {\hss #1 ~}\ignorespaces}
\newcommand{\scrD}{\mathcal{D}}
\newcommand{\scrV}{\mathcal{V}}
\newcommand{\scrX}{\mathcal{X}}
\newcommand{\scrY}{\mathcal{Y}}
\newcommand{\scrZ}{\mathcal{Z}}
\newcommand{\bbeta}{\boldsymbol{\beta}}
\newcommand{\bzeta}{\boldsymbol{\zeta}}
\newcommand{\btheta}{\boldsymbol{\theta}}
\newcommand{\bpi}{{\boldsymbol{\pi}}}
\begin{document}
\maketitle

\begin{abstract}
    We introduce an innovative approach that incorporates a {\em Distributionally
      Robust Learning (DRL)} approach into Cox regression to enhance the robustness
    and accuracy of survival predictions. By formulating a DRL framework with a
    Wasserstein distance-based ambiguity set, we develop a variant Cox model that is
    less sensitive to assumptions about the underlying data distribution and more
    resilient to model misspecification and data perturbations. By leveraging
    Wasserstein duality, we reformulate the original min-max DRL problem into a
    tractable regularized empirical risk minimization problem, which can be computed
    by exponential conic programming. We provide guarantees on the finite sample
    behavior of our DRL-Cox model. Moreover, through extensive simulations and real
    world case studies, we demonstrate that our regression model achieves superior
    performance in terms of prediction accuracy and robustness compared with
    traditional methods.
\end{abstract}
\paragraph*{Data and Code Availability}
 A portion of the data we used is publicly available and is included in the code below. The remaining data is sourced from the Pregnancy Study Online (PRESTO), a web-based preconception cohort study \citep{wise_main}. Due to privacy considerations, these data cannot be shared publicly, as PRESTO participants did not provide informed consent for external data sharing. Our code is available at \url{https://github.com/noc-lab/drl_cox}.
\paragraph*{Institutional Review Board (IRB)} The research was conducted using de-identified data and was exempt from IRB approval. 

\section{Introduction} 
Survival analysis is a cornerstone in the field of medical research, providing
insights into the time until an event of interest, such as death or disease
recurrence. The Cox proportional hazards model \citep{Cox1972} has become a foundational tool in survival analysis, allowing researchers to assess the
impact of various covariates on survival time without the need to specify the
underlying survival distribution explicitly. Despite its widespread adoption, the
classical Cox model and its extensions often assume a known or fixed distribution of
covariates and model parameters. This assumption may result in significant
inaccuracies in survival predictions when faced with real-world data complexities,
such as departure from proportional hazards, data heterogeneity, and the presence of
outliers.

Recent advancements in optimization and statistical learning have introduced the
concept of {\em Distributionally Robust Learning (DRL)}, a methodology that seeks to
improve model performance under distributional uncertainty. DRL achieves this by
optimizing worst-case performance across a set of possible distributions defined
within an ambiguity set, thus offering a more resilient approach to modeling under
uncertainty. The integration of DRL into survival analysis, particularly the Cox
regression model, presents an opportunity to address some of the inherent limitations
of traditional survival analysis methods by enhancing their robustness against
uncertainties in the data distribution.

\subsection{Related Works} 
In recent years, DRL has undergone extensive development and become a powerful tool
for enhancing model robustness in the presence of uncertainty and distributional
shifts. DRL is designed to minimize worst-case loss over a set of possible distributions,
rather than relying on a single empirical distribution. Some of the earlier works \citep{Ben-Tal2009, dro2, dro3} laid the foundation for
broader applications in data-driven optimization, with extensions into machine
learning \citep{chen2018robust,EsfahaniK18, gao2023}. These works developed a
comprehensive DRL framework under the Wasserstein metric and applied this framework
onto various topics, such as nearest-neighbor estimation, semi-supervised learning
and reinforcement learning.

More recent developments have integrated DRL into more complex settings such as deep
learning and survival analysis. For example, \citet{EsfahaniK18} expanded DRL to settings with ambiguous data, optimizing models to handle uncertainties in
real-world applications. Moreover, \citet{DRL_ranking} introduced DRL into {\em
  Learning-To-Rank (LTR)} methods and constructed a new robust LTR model which
addresses the robustness issue by using the Wasserstein framework, effectively
mitigating the impact of noise, adversarial perturbations, and data contamination in
ranking tasks. These advancements demonstrate the growing importance of DRL in
creating more reliable, robust models, which has been widely applied into fields
where variability and data noise can significantly affect model performance, such as
finance \citep{finance1, finance2} and healthcare \citep{healthcare2, hao2023}.

On the other hand, model robustness has been a central focus in the field of survival
analysis for several decades, with researchers continually developing methods to
improve the resilience of survival models against challenges like outliers,
censoring, and distributional shifts. Over the years, techniques such as penalized
Cox \citep{pcox} and adaptive Lasso Cox \citep{10.1093/biomet/asm037} have been
introduced to enhance the robustness of Cox models, particularly in medical datasets
where prediction accuracy is critical for patient outcomes. These advancements aim to
ensure that survival models not only perform well under ideal conditions but also
maintain reliability in the presence of real-world data variability.

However, despite the wide application of DRL in various fields, its utilization in
survival analysis has been relatively rare. To our knowledge, there is only one series of
existing work \citep{HuC22} in this area, which used a KL divergence framework \citep{Duchi2018} to apply DRL on the Cox regression model. Although their approach
demonstrated improvements in fairness metrics, it does not guarantee robustness under outliers and perturbations when compared to the standard Cox
model. This highlights a major gap in the literature, as existing methods do not
fully address the challenges posed by noisy or highly variable datasets, which are
common in real-world applications such as healthcare. Therefore, there is a pressing
need for further exploration of DRL-based Cox models that not only improve fairness
but also ensure robustness to data irregularities, making survival predictions more
reliable and applicable to broader domains. Addressing this gap could lead to
substantial advancements in fields where robust survival predictions are critical,
including personalized medicine and risk assessment.

\subsection{Main Contribution} 
This paper aims to bridge the gap between robust optimization techniques and survival
analysis by proposing a DRL-enhanced Cox regression model with a Wasserstein-distance
induced ambiguity set. Our contributions are twofold. First, we introduce a novel
approach to survival analysis that leverages the strengths of DRL to mitigate the
effects of uncertain data distributions, thereby improving the reliability and
accuracy of survival estimates. By adjusting the Cox loss function to adapt the DRL
framework, we establish a well-defined min-max stochastic formulation for the DRL-Cox
model. We then use Wasserstein duality to derive an exponential conic program as a
tractable relaxation, where equality is reached if the feature space $\scrX$
coincides with $\mathbb{R}^d$.  Furthermore, we carry out a stochastic analysis of
the DRL-Cox model. We show that, with high probability, the optimal DRL loss acts as
an upper bound for the true expected loss, highlighting the model's favorable
behavior in finite sample scenarios. This suggests that the model maintains
robustness even in the presence of data contamination or noise.  Second, through a
series of simulations and real-world applications, we empirically demonstrate the
advantages of our proposed model over classical Cox regression techniques as well as
penalized Cox models, highlighting its potential to provide more robust and accurate
predictions in various settings. The numerical experiments demonstrate that our
DRL-Cox model exceeds the two prior Cox models by more than 5\% in terms of
predictive accuracy. This improvement highlights the effectiveness of incorporating
DRL in survival analysis, enhancing the model's ability to handle distributional
shifts and outliers more effectively than standard Cox models.

The remainder of this paper is organized as follows: Sec.~\ref{sec:prel} introduces
background material and presents a basic formulation of the DRL
problem. Sec.~\ref{sec:form} presents our main theoretical results, including a
tractable form of the DRL-enhanced Cox model and the associated optimization
strategy. After proving the main theorem, we also investigate the new model's finite
sample performance. Sec.~\ref{sec:exp} details the experimental setup, data
description, and presents results from our empirical analysis. Finally,
Sec.~\ref{sec:conc} presents the implications of our findings, potential limitations,
and directions for future research.

{\bf Notational Conventions:} Boldfaced lowercase letters denote vectors, boldface
upper case letters denote matrices, and ordinary lowercase letters denote
scalars. Given any integer $N>0$, we use $\lb N\rb$ to denote $\{1,\ldots,N\}$.  We
use $\mathbb{R}$ to represent the set of real numbers, and $\mathbb{R}^+$ the set of
non-negative real numbers. All vectors are column vectors and prime denotes
transpose. For space-saving reasons, we write $\bx=(x_1,\ldots,x_d)\in\mathbb{R}^d$.
Given $p\geq 1$ and $\bx\in\mathbb{R}^d$, $\|x\|_p:=(\sum_{i\in\lb
  d\rb}\bx_i^p)^{1/p}$ denotes the $\ell_p$ norm.  $\mathbb{E}_{\mathbb{P}}$ denotes
the expectation under a probability distribution $\mathbb{P}$.  For a dataset
$\scrD=\{\bz_1,\ldots,\bz_N\}$, we use $\hat{\mathbb{P}}_N$ to denote the empirical
measure supported on $\scrD$. In particular, $\hat{\mathbb{P}}_N = \frac{1}{N}
\sum_{i=1}^{N} \delta_{\bz_i}$, where $\delta_{\bz_i}$ denotes the Dirac delta
function at point $\bz_i$.

\section{Background Material} \label{sec:prel}

\subsection{Cox Regression Model}

The Cox regression model estimates a hazard function, representing the instantaneous
risk of an event occurring at a given time, while accounting for the effects of
covariates. In our setting, the training dataset is
\[
\scrD=\{\bz_i: i\in \lb N \rb\}=\{ (\bx_i, y_i, \zeta_i): i\in \lb N \rb \},
\]
where each data point $\bz_i\in \scrD$ has feature vectors $\bx_i\in
\scrX\subset\mathbb{R}^d$, observed duration time $y_i\in \scrY\subset\mathbb{R}^+$,
and event indicator $\zeta_i\in\{0,1\}$. If $\zeta_i=1$, then the event of interest
occurs after time $t=y_i$. Otherwise, $\zeta_i=0$ indicates that the event of
interest did not occur, and data are censored after time $t=y_i$. In this case, the
true duration time of point $\bz_i$ is unknown, but we have a lower bound $t\geq y_i$
instead. Standard Cox regression predicts the hazard rate with the
model
\[
h_{\bbeta}(\bx, t) = h_0(t) e^{\bbeta' \bx},
\]
where one computes an optimal $\bbeta$ to minimize the following loss:
\begin{equation}\label{cox loss}
L(\bx,\by,\bzeta,\bbeta) = \sum_{i=1}^N \zeta_i \left[\log \left(\sum_{j: y_j \geq
    y_i}\!e^{\bbeta' \bx_j}\!\right)\!-\!\bbeta' \bx_i\!\right], 
\end{equation}
where $\by=(y_1,\ldots,y_N)$ and $\bzeta=(\zeta_1,\ldots,\zeta_N)$. The above loss
function sums up the individual losses of every uncensored data point. For each
individual loss, we compute the log-sum-exponential function of all data points that
have longer duration times. It is noteworthy that $L$ is convex with regards to $\bx$
and $\bbeta$, but not with regards to $\by$, which is one of the main challenges for
DRL, as we will explain in Sec.~\ref{sec:loss}.

\subsection{DRL formulation for the Cox Model} \label{sec:form-cox}

DRL considers a range of possible distributions contained within a defined
uncertainty set, also known as the ambiguity set.  Defining the ambiguity set is
crucial, leading to the development of several distinct frameworks. This paper adopts the approach of \citet{chen2021}, wherein the ambiguity set is characterized
using the Wasserstein distance. The Wasserstein distance serves as a metric between
probability distributions as follows.

\begin{definition}[Wasserstein Distance]
    Consider two Polish (i.e., complete, separable, metric) probability spaces
    $(\scrV_1,\mathbb{P})$ and $(\scrV_2,\mathbb{Q})$ and a lower-semicontinuous cost
    function $s : \scrV_1\times \scrV_2 \rightarrow \mathbb{R}\cup\{\infty\}$. Then, the
    (order-1) Wasserstein distance can be defined as:
\begin{equation*}    W_s(\mathbb{P},\mathbb{Q})= \min_{\bpi} \int_{\scrV_1\times
    \scrV_2} s(\bv_1,\bv_2) d\bpi(\bv_1,\bv_2),
\end{equation*}
where $\bpi\in\mathcal{P}(\scrV_1\times \scrV_2)$ is a joint probability
  distribution of $\bv_1,\bv_2$ with marginals $\mathbb{P}$ and $\mathbb{Q}$.
\end{definition} 
\noindent The basic formulation of a DRL Cox problem can be expressed as follows:
\begin{equation}\label{original_DRO}
\min_{\bbeta \in \mathbb{R}^d} \sup_{\mathbb{P} \in \Omega_{\epsilon}} \mathbb{E}_{\mathbb{P}} [l(\bx,y,\zeta,\bbeta)],
\end{equation}
where $\bbeta$ are the coefficients associated with the covariates $\bx\in
\scrX\subseteq \mathbb{R}^d$, $(\bx,y,\zeta)$ represents the uncertain data point
with probability distribution $\mathbb{P}$, and $l(\bx,y,\zeta,\bbeta)$ is the
individual loss function of a single data point in Cox regression, which will be
determined in the next subsection. In (\ref{original_DRO}),
$\Omega_{\epsilon}:=\{\mathbb{P} : W_s(\mathbb{P}, \mathbb{P}_0) \leq \epsilon\}$ is
the set of all probability distributions that are within a Wasserstein ball (induced
by the metric $s$) of radius $\epsilon$, centered at a nominal distribution
$\mathbb{P}_0$, which will be taken to be the empirical distribution
$\hat{\mathbb{P}}_N$ induced by the training data set. By construction,
\eqref{original_DRO} ensures that the optimization solution is robust against
training distributions within the ambiguity set.

\subsection{Choice of Loss Function} \label{sec:loss}

One of the principal challenges encountered in this study lies in the inherent
structure of the standard Cox loss function. The DRL model requires that the loss
function depends only on individual data points, but the individual loss as
delineated in \eqref{cox loss} does not adhere to this prerequisite. Sample
splitting \citep{HuC22}, as an alternative strategy, has been proposed to address
this limitation. Despite its efficacy in enhancing fairness, this methodology does
not yield superior accuracy when subjected to data contamination, in comparison to analyses
performed with the original Cox model.

As a novel approach, this study proposes treating all training data as fixed
constants for the computation of the individual loss terms. Consequently, the
individual loss function is redefined as follows:
\begin{equation}\label{individual loss}
\!l(\!\bx,y,\zeta,\bbeta)=\!\zeta\!\left[\log\!\left(e^{\bbeta'\bx}+\!\sum_{j: y_j \geq
    y}\! e^{\bbeta' \bx_j}\right)\!-\!\bbeta' \bx\!\right]. 
\end{equation}
Compared to the standard Cox individual loss in \eqref{cox loss}, we introduce an additional exponential term in \eqref{individual loss}. This modification is intentional and aligns with the structure of the Cox proportional hazards model, where the partial likelihood inherently depends on pairwise comparisons across the dataset. By incorporating this term, we ensure that the loss function remains well-defined under the DRL framework in \eqref{original_DRO} while preserving its theoretical consistency with the proportional hazards assumption. Additionally, this adjustment guarantees the non-negativity of the loss for data points with long durations, providing stability in optimization. We acknowledge that this design deviates from the standard Cox loss; however, as demonstrated in our numerical experiments, this trade-off contributes to the robust performance of our approach. With this foundation, we now present our main result.

\section{Formulation of a DRL-Cox Model} \label{sec:form}

\subsection{Tractable form of DRL-Cox}

Given $N$ training data points $\{(\bx_i,y_i,\zeta_i)\}_{i\in \lb N \rb}$, we
    consider the empirical distribution they induce, which will serve as $\mathbb{P}_0$,
    the center of ambiguity set $\Omega_\epsilon$. Plugging the individual loss
    \eqref{individual loss} into \eqref{original_DRO}, we obtain the following
    stochastic program:
\begin{equation}
\label{detail_dro}
     \!\min_{\bbeta \in \mathbb{R}^d} \sup_{\mathbb{P} \in \Omega_{\epsilon}}\!\mathbb{E}_{(\bx,y,\zeta)\sim\mathbb{P}}\!\left[\!\zeta
\!\left(\log\!\left(\!e^{\bbeta'\bx}\!+\!\!\!\sum_{i:y_i\geq
         y}\! e^{\bbeta'\bx_i}\!\right)\!-\!\bbeta'\bx\!\right)\!\right]\!. 
\end{equation}
Evidently, Equation \eqref{detail_dro} presents challenges in terms of direct
solvability due to its inherent complexity. Consequently, it becomes imperative to
identify a tractable form that facilitates its estimation. To this end, we introduce
the main theorem of this study, which posits the following convex program as an upper
bound to Equation \eqref{detail_dro}, and we prove the theorem in two
phases. Initially, a strong dual formulation is constructed for the inner supremum
problem, followed by discretizing the non-convex $y-$direction. Subsequently, through
the application of the dual norm and the convex conjugate, a relaxation of the
supremum problem is attained. This process culminates in the computation of the
convex conjugate, which yields a tractable form of the original problem.

\begin{theorem}[Relaxation of DRL-Cox] Suppose the training data points
  $z_1,\ldots,z_N$ are bounded and sorted in decreasing order with regard to duration $y$, and
  the Wasserstein distance is induced by the $\ell_p$ norm. Let $(p,q)$ be H{\"o}lder
  conjugates, so that $p,q\geq 1$ and $\frac{1}{p}+\frac{1}{q}=1$. Then the following
  exponential conic program provides an upper bound for \eqref{detail_dro}:
\begin{align}\label{tractableDRO} 
\min_{\bbeta\in\mathbb{R}^d} &\qquad\quad\qquad \epsilon\|(\bbeta,\alpha)\|_q + \frac{1}{N} \sum_{i=1}^N \zeta_i s_i \\
\text{s.t.} &\  s_i \geq \log\!\left(e^{\bbeta'
  \bx_i}+ \hspace{-1pt} \sum_{i=1}^k e^{\bbeta'\bx_i}\right)\!-\!\bbeta' \bx_i -
\alpha\left(y_i-{y}_k\right),\nonumber\\ & \qquad\qquad\qquad\qquad\qquad\qquad\;\;\forall\; 1\leq i\leq k\leq N. \nonumber 
\end{align}
Moreover, if the covariate space satisfies $\scrX=\mathbb{R}^d$, then the optimal
cost of \eqref{tractableDRO} becomes equal to the value of the stochastic program
\eqref{detail_dro}.
\end{theorem}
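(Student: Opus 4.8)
The plan is to revisit the derivation of the upper bound \eqref{tractableDRO} — which runs through Wasserstein strong duality, discretization of the non-convex $y$-direction, a dual-norm relaxation, and a convex-conjugate computation — and show that, when $\scrX=\mathbb{R}^d$, every one of those steps is an equality. First I would invoke Wasserstein strong duality (legitimate because $z_1,\dots,z_N$ are bounded, \eqref{individual loss} is upper semicontinuous in $(\bx,y)$, and the cost of perturbing $\zeta$ is infinite so that only the samples with $\zeta_i=1$ contribute): the inner supremum in \eqref{detail_dro} equals $\inf_{\lambda\ge0}\big\{\lambda\epsilon+\tfrac1N\sum_{i:\zeta_i=1}\sup_{(\bx,y)}[\,l(\bx,y,1,\bbeta)-\lambda\|(\bx-\bx_i,y-y_i)\|_p\,]\big\}$, an identity for any $\scrX$. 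Second, since enlarging $y$ only removes indices from the risk set $\{j:y_j\ge y\}$ and hence weakly decreases the loss while strictly increasing transport cost, the $i$-th supremum is unchanged if we restrict to $y\le y_i$; partitioning that half-line into the intervals on which the risk set is constant (the $y_j$ being sorted decreasingly) and minimizing the transport term over $y$ inside each interval shows the inner optimum is attained at $y=y_k$, reducing the $i$-th term to $\max_{i\le k\le N}\sup_{\bx\in\scrX}\{\,g_k(\bbeta'\bx)-\lambda\|(\bx-\bx_i,\,y_i-y_k)\|_p\,\}$ with $g_k(\bbeta'\bx):=\log(e^{\bbeta'\bx}+\sum_{j=1}^k e^{\bbeta'\bx_j})-\bbeta'\bx$ — again an identity.

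Third, and this is the only stage where an inequality enters in general, I would dualize the $\ell_p$ norm and take the convex conjugate of the log-sum-exp $g_k$ (whose conjugate is supported on a simplex), turning $\sup_{\bx\in\scrX}$ into a constrained maximization by way of $\sup_{\bx\in\scrX}\{\langle\bc,\bx-\bx_i\rangle-\cdots\}\le\sup_{\bx\in\mathbb{R}^d}\{\langle\bc,\bx-\bx_i\rangle-\cdots\}$; the right-hand side evaluates to $+\infty$ unless a dual-norm budget holds and otherwise feeds the penalty $\epsilon\|(\bbeta,\alpha)\|_q$ and the affine $y$-correction $-\alpha(y_i-y_k)$, with the H{\"o}lder conjugacy of $(p,q)$ producing the $\ell_q$ norm of the stacked vector $(\bbeta,\alpha)$. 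When $\scrX=\mathbb{R}^d$ this inclusion is an identity, so the bound is attained. Fourth, eliminating the Wasserstein multiplier $\lambda$ in favour of $\alpha$ via $\lambda=\|(\bbeta,\alpha)\|_q$ and introducing epigraph variables $s_i$ for the per-sample worst-case losses is an exact reparametrization yielding precisely \eqref{tractableDRO}. Composing the four equalities gives that the optimal value of \eqref{tractableDRO} equals the value of \eqref{detail_dro}.

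The crux — the step I expect to fight with — is making stage three genuinely gap-free: the transport cost $\|(\bx-\bx_i,\,y_i-y_k)\|_p$ ties the $\bx$-displacement and the pinned $y$-displacement into a single $\ell_p$ ball, so after dualizing the norm with one multiplier I must check that the optimal split of the dual budget between the $d$ coordinates carrying $\bbeta$ and the single coordinate carrying $\alpha$ reconstitutes $\|(\bbeta,\alpha)\|_q$ exactly, and at the same time that the simplex maximization coming from the log-sum-exp conjugate contributes no slack of its own — i.e., that over $\mathbb{R}^d$ the convex-in-$\bx$ loss and the transported $y$-coordinate decouple cleanly. A minor technical point to dispatch is that the interval on which the risk set is constant may be open at its lower end, so the ``$y=y_k$'' optimum is reached at the closed upper endpoint for $k\ge i$ and the identity sought is between optimal values rather than maximizers; and one should note it is exactly the boundedness hypothesis that renders the strong-duality step an equality without integrability caveats.
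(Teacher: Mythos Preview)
Your proposal is correct and follows essentially the same route as the paper: Wasserstein strong duality for the inner supremum, discretization of the non-convex $y$-direction via monotonicity of the loss in $y$, dual-norm rewriting combined with the convex conjugate of the per-level loss $l_j$, and finally the substitution $\lambda=\|(\bbeta,\alpha)\|_q$, with the only inequality being the enlargement $\scrX\subseteq\mathbb{R}^d$ in the inner $\bx$-supremum. One minor remark: the conjugate of $g_k$ with respect to $\bx$ is supported on the segment $\{c\bbeta:c\in(-1,0)\}$ rather than a full simplex (the paper verifies this by a direct gradient/Hessian calculation showing $\nabla f(\bx)=0$ forces $\btheta=c\bbeta$), which is what gives $\kappa=\|\bbeta\|_q$; this does not affect your argument.
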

\begin{proof}
We first introduce the following corollary to construct a strong dual for the inner
supremum of \eqref{detail_dro}. 
\begin{corollary}{\citep[Cor.2]{gao2023}}
Suppose that we use the empirical distribution
\[
\hat{\mathbb{P}}_N = \frac{1}{N} \sum_{i=1}^{N} \delta_{\bz_i},
\]
as the center of ambiguity set $\Omega_{\epsilon}$ and a Wasserstein distance induced
by the $\ell_p$ norm, where $\bz_i$, $i \in \lb N \rb$, are the observed realizations
of $\bz$. Then the primal problem $v_P=\sup_{\mathbb{Q} \in \Omega_{\epsilon}}
\mathbb{E}_{\mathbb{Q}}[l(\bz,\bbeta)]$ has a strong dual
    \begin{equation}\label{strongdual}
        \!v_P\!=\!\min_{\lambda\geq 0} \!\left\{\! \lambda\epsilon \!+\! \frac{1}{N}
        \sum_{i=1}^{N} \sup_{\bz\in \scrZ}[ l(\bz,\bbeta)\!-\!\lambda\|\bz-\bz_i\|_p]\!
        \right\}\!.\! 
    \end{equation}
\end{corollary}

Recall $\bz=(\bx,y,\zeta)$. Then the loss function becomes \begin{equation}\label{loss_l}
    l(\bz,\bbeta)=\zeta\left( \log\left(e^{\bbeta'\bx}+\sum_{i:y_i\geq
      y}e^{\bbeta'\bx_i}\right)-\bbeta'\bx\right),
\end{equation} 
and let $\scrZ$ the space of $\bz$'s. 
For each $i\in\lb N \rb$, we examine the $i$th supremum term in the right hand side of
\eqref{strongdual}:
 \begin{equation}\label{ori_s}
     \sup_{\bz\in \scrZ}[ l(\bz,\bbeta) - \lambda \|\bz-\bz_i\|_p]. 
 \end{equation}
If $\zeta_i=0$, $l(\bz,\bbeta)$ becomes $0$ by \eqref{loss_l}, so that the supremum
term in \eqref{ori_s} also becomes $0$ at the optimal solution $\bz^*=\bz_i$.
Therefore, we may set the $i$th supremum term in the right hand side of
\eqref{strongdual} to be $\zeta_i s_i$, where
\begin{equation}\label{sup_s}
s_i:=\sup_{(\bx,y)\in \scrX\times \scrY}\left\{l(\bx,y) -\lambda\|(\bx-\bx_i,y-y_i)\|_p\right\}.
\end{equation}
Note that in \eqref{sup_s}, we simplify the loss function \( l(\bx, y, \zeta, \bbeta)
\) to \( l(\bx, y) \) because \(\zeta_i\) has been extracted, and \(\bbeta\) remains
fixed within the supremum term:
\begin{align*}
l(\bx,y):&=l(\bx,y,1,\bbeta)\nonumber\\
     &=\log\left(e^{\bbeta'\bx}+\sum_{i:y_i\geq y}e^{\bbeta'\bx_i}\right)-\bbeta'\bx.
 \end{align*}
Our current goal is to simplify \eqref{sup_s}. Although the utilization of duality
could facilitate this process for convex loss functions $l(\bx,y)$, the loss function
$l$, as previously mentioned, exhibits a piece-wise constant nature with respect to
the $y$-direction, rendering it non-convex. However, note that $l(\bx,\cdot)$ is
monotonically decreasing, which implies that given fixed $\bx$, the function
\[
l(\bx,y)-\lambda\|(\bx-\bx_i,y-y_i)\|_p
\] 
will similarly decrease for $y \geq y_i$. Consequently, the supremum $s_i$ will be
attained when $y^* \leq y_i$. In fact, given the piece-wise constant characteristic
of $l(\bx,\cdot)$, the supremum will be achieved within the set $\{y_i, y_{i+1},
\ldots, y_n\}$. Therefore, we proceed to discretize $y$ in the following manner:
 \begin{equation}
\label{si}
    s_i=\max_{i\leq j\leq N}\sup_{\bx\in X}[l(\bx,y_j)-\lambda\|(\bx-\bx_i,y_j-y_i)\|_p].
\end{equation}
Since the non-convex variable $y$ has been discretized, we introduce two notational
simplifications. For $j\in \lb N \rb$, we may write the loss function as
\begin{equation}
    l_j(\bx):=l(\bx,y_j),
\end{equation}
and we may simplify the $j$th supremum term in $s_i$ as
\begin{equation}
    s_i^j:=\sup_{\bx\in X}[l_j(\bx)-\lambda\|(\bx-\bx_i,y_j-y_i)\|_p].
\end{equation}
In order to evaluate the supremum in \eqref{si}, we introduce the dual norm and the
convex conjugate. 

\begin{definition} 
  \begin{enumerate}
    \item {\rm (Dual Norm)} Given a norm $\|\cdot\|$, its dual norm $\|\cdot\|_*$ is defined as: 
    \begin{equation}\label{dual norm}
    \|\btheta\|_*:=\sup_{\|\bx\|\leq 1}\btheta'\bx.
    \end{equation}
    \item {\rm (Convex Conjugate)} Given a convex function $l$, its convex conjugate $l^*(\cdot)$ is:
    \begin{equation} 
        l^*(\btheta):=\sup_{\bx\in \textup{dom }l}\{\btheta'\bx-l(\bx)\}.
    \end{equation}
     In particular, we have $l(\bx)=\sup_{\btheta\in\Theta}[\btheta'\bx-l^*(\btheta)]$, where 
     \begin{equation}\label{big theta}
         \Theta:=\{\btheta: l^*(\btheta)<\infty\} 
     \end{equation}
     is the effective domain of the convex conjugate $l^*$.
  \end{enumerate}
\end{definition}

According to \eqref{dual norm}, the dual norm of $\ell_p$ norm is the $\ell_q$ norm
for $\frac{1}{p}+\frac{1}{q}=1$. On the other hand, since $l_j$ is convex, it is
equal to the convex conjugate of $l_j^*$. Therefore, we have the following relaxation
for $s_i^j$:
\begin{subequations}
\begin{align}
    &s_i^j\!=\!\sup_{\bx\in X}\sup_{\btheta\in\Theta}
    [\btheta'\bx-l_j^*(\btheta)-\lambda\|(\bx-\bx_i,y-y_i)\|_p]\nonumber \\
    &=\!\sup_{\bx\in X}\sup_{\btheta\in\Theta}\inf_{\|(\br,\alpha)\|_q\leq\lambda}\! 
    [\btheta'\bx\!-\!l_j^*(\btheta)\!+\!\br'(\bx\!-\!\bx_i)\!+\!\alpha(y\!-\!y_i)]\label{dual norm eq}\\
    &=\!\sup_{\btheta\in\Theta}\inf_{\|(\br,\alpha)\|_q\leq\lambda}\sup_{\bx\in X}\![(\btheta\!+\!\br)'\!\bx\!-\!l_j^*(\btheta)\!-\!\br'\!\bx_i\!+\!\alpha(y\!-\!y_i)]\label{duality eq} \\
    &\leq\!\sup_{\btheta\in\Theta}\inf_{\|(\br,\alpha)\|_q\leq\lambda}\sup_{\bx\in \mathbb{R}^d}\!
    [(\btheta\!+\!\br)'\!\bx\!-\!l_j^*(\btheta)\!-\!\br'\!\bx_i\!+\!\alpha(y\!-\!y_i)],
    \label{final ineq} 
\end{align}
\end{subequations}
where \eqref{dual norm eq} follows from the definition \eqref{dual norm}, and
\eqref{duality eq} is a direct result of the Minimax Theorem \citep{minimax}. The
inner supremum over $\bx\in\mathbb{R}^d$ in \eqref{final ineq} reaches $\infty$
unless $\br=-\btheta$. For all $j\in \lb N \rb$, set
\begin{equation}\label{kappaj}
    \kappa_j:=\sup\{\|\btheta\|_q:l_j^*(\btheta)<\infty\}.
\end{equation} 
If $\kappa_j>\lambda$, according to \eqref{big theta}, we can pick some
$\btheta\in\Theta$ such that $\|\btheta\|_q>\lambda$, which makes the inner supremum
attain $\infty$. Otherwise, $\kappa_j\leq \lambda$, and we cam take $\br=-\btheta$ to
achieve the inner infimum. Therefore,
\begin{align}\label{sij_relax}
    s_i^j&\leq\inf_{\|(-\btheta,\alpha)\|_q\leq\lambda}\sup_{\btheta\in\Theta}
    [-l_j^*(\btheta)+\btheta'\bx_i+\alpha(y-y_i)]\nonumber\\
    &=\inf_{\|(\btheta,\alpha)\|_q\leq\lambda}l_j(\bx_i)-\alpha(y_i-y).
\end{align}

Take $\kappa:=\max_{j\in \lb N \rb}\kappa_j$, then according to the relaxation in
\eqref{sij_relax} and the Minimax Theorem \citep{minimax}, for all
$\lambda\geq\kappa$, we obtain the following upper bound of \eqref{si}:
\begin{align*}
    s_i= \max_{i\leq j\leq N}s_i^j&\leq \max_{i\leq j\leq N}\inf_{\|(\btheta,\alpha)\|_q\leq\lambda}l_j(\bx_i)-\alpha(y_i-y_j)\\
    &=\inf_{\|(\btheta,\alpha)\|_q\leq\lambda}\max_{i\leq j\leq N}l_j(\bx_i)-\alpha(y_i-y_j).
\end{align*}
\noindent Therefore, by taking $\lambda=\|(\kappa,\alpha)\|_q$, we conclude the following relaxation of the primal problem \eqref{strongdual}: 
\begin{align}\label{vp}v_P&=\min_{\lambda\geq 0}\;\lambda\epsilon+\frac{1}{N}\sum_{i=1}^N\zeta_i s_i\nonumber\\
&\leq \epsilon\|(\kappa,\alpha)\|_q+\frac{1}{N}\sum_{i=1}^N\zeta_i\max_{i\leq j\leq
    N}[l(\bx_i,y_j)\!-\!\alpha(y_i\!-\!y_j)].   
\end{align}
Lastly, we aim to compute $\kappa$, which is the maximum of \eqref{kappaj}. To that
end, we consider the behavior of $l_j^*(\btheta)$. Given fixed $\btheta\in\Theta$
and $j\in \lb N\rb$, we denote $f(\bx)=\btheta'\bx-l_j(\bx)$,
which implies $l_j^*(\btheta) = \sup_{\bx\in \textup{dom }l} f(\bx)$. 
To maximize $f(\bx)$, we compute its gradient and Hessian matrix:
\begin{align*}\label{fx} 
    \!\nabla f(\bx)&=\btheta-\nabla\left(\log\left(e^{\bbeta'\bx}+\sum_{i=1}^j e^{\bbeta'\bx_i}\right)-\bbeta'\bx\right)\nonumber\\
&=\btheta+\frac{C}{e^{\bbeta'\bx}+C}\bbeta,\\
\nabla^2 f(\bx)&=
-\frac{e^{\bbeta'\bx}}{(e^{\bbeta'\bx}+C)^2}\bbeta\bbeta',\ \text{where}\ C =
\sum_{i=1}^j e^{\bbeta'\bx_i}. 
\end{align*}
Since the Hessian is negative semi-definite, $f(\bx)$ will reach the maximum if and
only if $\nabla f(\bx)$ reaches $0$. Therefore, for all $\btheta\in \mathbb{R}^d$,
$l^*_j(\btheta)<\infty$ if and only if exists $c\in (-1,0)$ such that
$\btheta=c\bbeta$. Therefore, we obtain $\kappa_j=\|\bbeta\|_*=\|\bbeta\|_q$ for
all $j$, and so is $\kappa$. Plugging $\kappa=\|\bbeta\|_q$ into \eqref{vp}, we
reach our desired relaxation of the stochastic program in \eqref{detail_dro}:
\begin{gather*}
     \inf_{\bbeta} \sup_{\mathbb{P} \in \Omega_{\epsilon}} E_{\mathbb{P}} [l(
       \bx,y,\zeta,\bbeta)]\leq\min_{\bbeta,\alpha}
     \epsilon\|(\bbeta,\alpha)\|_q+\frac{1}{N}\sum_{i=1}^N\zeta_i s_i, 
\end{gather*} 
where $s_i=\max_{i\leq j\leq N}[l(\bx_i,y_j,1,\bbeta)-\alpha(y_i-y_j)]$.
\end{proof}

\subsection{Performance Analysis}

Next, we analyze the performance of the DRL-Cox model. Our goal is to derive
stochastic guarantees that ensure the model's robustness and generalization
capability. While there exists analysis \citep{EsfahaniK18} on the asymptotic consistency of DRL, our study primarily focuses on the finite sample
performance, which is directly relevant to the numerical experiments presented in the
following section.

Given the empirical distribution \(\hat{\mathbb{P}}_N\) of $N$ observed data points
in the support of $\scrX\times \scrY\times \{0,1\}$, we denote the true measure by
\(\mathbb{P}^*\). Recall that \(\mathbb{P}^*\) is unknown since
\(\hat{\mathbb{P}}_N\) is contaminated. Therefore, we compute the DRL problem
\eqref{original_DRO} using \(\hat{\mathbb{P}}_N\) to implicitly optimize over the
true measure that is included in the ambiguity set with high confidence. Let
$\hat{J}_N$ and $\hat{\bbeta}_N$ be the optimal cost and optimal solution to the DRL
problem \eqref{original_DRO}, respectively. We thus have:
\begin{align*}
    \hat{J}_N = & \min_{\bbeta \in \mathbb{R}^d} \sup_{\mathbb{P} \in \Omega_{\epsilon}}
    \mathbb{E}_{\mathbb{P}} [l(\bx,y,\zeta,\bbeta)] \\
    = & \sup_{\mathbb{P} \in
      \Omega_{\epsilon}} \mathbb{E}_{\mathbb{P}} [l(\bx,y,\zeta,\hat{\bbeta}_N)], 
\end{align*}
where $\Omega_{\epsilon}=\{\mathbb{P} : W_s(\mathbb{P}, \hat{\mathbb{P}}_N) \leq
\epsilon\}$ is the Wasserstein ball defining the ambiguity set. To evaluate the
behavior of the optimal solution $\hat{\bbeta}_N$, we check if the expected loss
$\mathbb{E}_{\mathbb{P}^*} [l(\bx,y,\zeta,\hat{\bbeta}_N)]$ is bounded above by the
training loss $\hat{c}_N$. If this event occurs with high probability, we can assert
that $\hat{\bbeta}_N$ exhibits favorable finite sample behavior. In fact, we derive
the following lemma, which follows from a previous work \citep[Prop. 3]{zhao2015}.
\begingroup
  \tiny                  
  \setlength{\tabcolsep}{5pt}     
  \renewcommand{\arraystretch}{0.9}
\begin{table*}[b] 
    \centering
    \caption{Summary Statistics of the WHAS500 Dataset \citep{whas500}.}
    \label{tab:data_stats}
  \begin{tabular}{|c|c|c|c|c|c|c|c|c|c|c|c|c|c|}
    \hline
    & $\bx_1$ & $\bx_2$ & $\bx_3$ & $\bx_4$ & $\bx_5$ & $\bx_6$ & $\bx_7$ & $\bx_8$ & $\bx_9$ & $y$ & $\zeta$ \\
    \hline
    mean  &  69.846 &  26.614 &  78.266 &  87.018 &   6.116 & 144.704 &   0.156 &   0.750 &   0.400  &  882.436 &   0.430 \\
    \hline
    std   &  14.491 &   5.406 &  21.545 &  23.586 &   4.714 &  32.295 &   0.363 &   0.433 &   0.490 &  705.665 &   0.496 \\
    \hline
    min   &  30.000 &  13.045 &   6.000 &  35.000 &   0.000 &  57.000 &   0.000 &    0.000 &   0.000 &    1.000 &   0.000 \\
    \hline
    max   & 104.000 &  44.839 & 198.000 & 186.000 &  47.000 & 244.000 &   1.000 &  1.000 &   1.000 & 2358.000 &   1.000 \\
    \hline
\end{tabular}
    
\end{table*}
\endgroup
\begin{lemma}[Performance Guarantee]
    Suppose the data space $\scrX\times \scrY$ is bounded and finite, and the true
    measure $\mathbb{P}^*$ is finite. Given any significance level $\alpha\in (0,1)$,
    define the following threshold value:
    \[
    B(\alpha):= \sup\{\|\bz_1-\bz_2\|_p:\bz_1,\bz_2\in \scrX\times \scrY\}
    \sqrt{\frac{\log(1/\alpha)}{N}}.
    \] 
    Then for all ambiguity sets with radii $\epsilon\geq B(\alpha)$, we have the
    following robustness guarantee: 
    \begin{equation}\label{guarantee}
    \mathbb{P}(\mathbb{E}_{\mathbb{P}^*} [l(\bx,y,\zeta,\hat{\bbeta}_N)]\leq
    \hat{J}_N )\geq1-\alpha.
     \end{equation}
\end{lemma}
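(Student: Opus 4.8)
The plan is to reduce \eqref{guarantee} to the standard fact behind every data-driven Wasserstein-DRL out-of-sample guarantee: with probability at least $1-\alpha$ the unknown true measure $\mathbb{P}^*$ lies inside the ambiguity set $\Omega_{\epsilon}$ built around $\hat{\mathbb{P}}_N$. On that event the inequality in \eqref{guarantee} is deterministic, and in particular no uniform law of large numbers over $\bbeta$ is required.

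First I would record the realization-wise inclusion of events
\[
\bigl\{\,W_s(\mathbb{P}^*,\hat{\mathbb{P}}_N)\le\epsilon\,\bigr\}\ \subseteq\ \bigl\{\,\mathbb{E}_{\mathbb{P}^*}[l(\bx,y,\zeta,\hat{\bbeta}_N)]\le\hat{J}_N\,\bigr\}.
\]
On the left-hand event, $\mathbb{P}^*\in\Omega_{\epsilon}$, so $\mathbb{P}^*$ is feasible for the inner supremum defining $\hat{J}_N$; since $\hat{\bbeta}_N$ attains the outer minimum we have $\hat{J}_N=\sup_{\mathbb{P}\in\Omega_{\epsilon}}\mathbb{E}_{\mathbb{P}}[l(\bx,y,\zeta,\hat{\bbeta}_N)]\ge\mathbb{E}_{\mathbb{P}^*}[l(\bx,y,\zeta,\hat{\bbeta}_N)]$, which is exactly the right-hand event. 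Pushing probabilities through the inclusion, it remains only to establish the measure-concentration estimate $\mathbb{P}\bigl(W_s(\mathbb{P}^*,\hat{\mathbb{P}}_N)\le\epsilon\bigr)\ge1-\alpha$ whenever $\epsilon\ge B(\alpha)$.

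Second, I would prove that estimate, the one genuinely probabilistic ingredient, quoting \citet[Prop.~3]{zhao2015}; this is where the hypotheses that $\scrX\times\scrY$ is bounded and finite enter. With $D:=\sup\{\|\bz_1-\bz_2\|_p:\bz_1,\bz_2\in\scrX\times\scrY\}$, the point is that swapping any one of the $N$ i.i.d.\ observations moves $\hat{\mathbb{P}}_N$ by at most $1/N$ units of mass across distance at most $D$, so the map $(\bz_1,\dots,\bz_N)\mapsto W_s(\mathbb{P}^*,\hat{\mathbb{P}}_N)$ has bounded differences $D/N$; a bounded-differences (McDiarmid/Hoeffding) argument over a finite support then delivers a sub-Gaussian tail $\mathbb{P}\bigl(W_s(\mathbb{P}^*,\hat{\mathbb{P}}_N)>t\bigr)\le\exp(-cNt^2/D^2)$, and solving for $t$ at confidence $\alpha$ reproduces the threshold $B(\alpha)=D\sqrt{\log(1/\alpha)/N}$. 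Chaining the two steps yields $\mathbb{P}\bigl(\mathbb{E}_{\mathbb{P}^*}[l(\bx,y,\zeta,\hat{\bbeta}_N)]\le\hat{J}_N\bigr)\ge\mathbb{P}\bigl(W_s(\mathbb{P}^*,\hat{\mathbb{P}}_N)\le\epsilon\bigr)\ge1-\alpha$.

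The step I expect to be the real obstacle is that concentration bound. For general, not-finitely-supported distributions the empirical Wasserstein distance concentrates only at the dimension-dependent Fournier--Guillin rate, so the clean $N^{-1/2}$ scaling of $B(\alpha)$ genuinely depends on the finite-support assumption; moreover one must check that the displaced-mass/bounded-differences argument controls the full transport cost rather than merely the total-variation distance, and that any bias term such as $\mathbb{E}\bigl[W_s(\mathbb{P}^*,\hat{\mathbb{P}}_N)\bigr]$ is absorbed into the stated radius. Once \citet[Prop.~3]{zhao2015} is invoked, the remaining work — the event inclusion and the identity $\hat{J}_N=\sup_{\mathbb{P}\in\Omega_{\epsilon}}\mathbb{E}_{\mathbb{P}}[l(\bx,y,\zeta,\hat{\bbeta}_N)]$ recorded above — is routine.
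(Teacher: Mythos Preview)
Your proposal is correct and matches the paper's approach: the paper gives no independent proof of the lemma and simply states that it ``follows from a previous work \citep[Prop.~3]{zhao2015}''. Your two-step argument --- the deterministic event inclusion $\{W_s(\mathbb{P}^*,\hat{\mathbb{P}}_N)\le\epsilon\}\subseteq\{\mathbb{E}_{\mathbb{P}^*}[l(\cdot,\hat{\bbeta}_N)]\le\hat{J}_N\}$ followed by the concentration bound from \citet[Prop.~3]{zhao2015} --- is exactly the intended route, and in fact supplies more detail than the paper itself.
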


\section{Experiments} \label{sec:exp}

In this section, we present numerical experiments assessing the performance of our DRL-Cox model. The goal is to evaluate its robustness under the variability and imperfections of real-world data.

To ensure a comprehensive evaluation, we benchmark DRL-Cox against multiple survival analysis models, including the regular Cox model, the Ridge Cox model \citep{pcox}, the Lasso Cox model \citep{lassocox}, the Elastic Net Cox model \citep{elastic}, and the Sample-Splitting Cox model \citep{HuC22}. Beyond Cox-based models, we further include the Accelerated Failure Time (AFT) model \citep{aft} and the Random Survival Forest (RSF) model \citep{rsf}, ensuring a diverse set of methodologies for comparison.  

For consistency, all models are evaluated using the widely recognized concordance index (C-index) \citep{cindex} and the Integrated Time-Dependent AUC (iAUC) \citep{iauc}, ensuring a standardized and comprehensive metric for performance comparison.  

Since DRL-Cox introduces robustness through distributional considerations, a key factor influencing its performance is the choice of the ambiguity radius $\epsilon$. This parameter governs the balance between robustness and predictive accuracy. A small ambiguity radius results in an ambiguity set tightly concentrated around the empirical distribution, making the model nearly equivalent to the standard Cox model but with limited robustness to distributional shifts. Conversely, a large ambiguity radius increases conservatism, optimizing for the worst-case scenario over a broader range of distributions, which may lead to underestimation of risk scores and degraded performance in well-specified settings.

Recall that the performance guarantee in \eqref{guarantee} provides a theoretically justified lower bound for the ambiguity radius: for any significance level $\alpha$, choosing $\epsilon \geq B(\alpha)$ ensures that the expected loss under the true measure $\mathbb{P}^*$ does not exceed the empirical objective $\hat{J}_N$ with probability at least $1 - \alpha$. This threshold scales as $\mathcal{O}(\sqrt{\log(1/\alpha)/N})$, revealing a fundamental trade-off between robustness and sample efficiency. To refine the selection of $\epsilon$, we leverage the measure concentration method \citep{chen2021}, which bounds the deviation between empirical and true expectations to determine a statistically justified range of ambiguity radii. This prevents over-conservatism while maintaining robustness guarantees. To further fine-tune $\epsilon$, we apply cross-validation techniques, systematically evaluating different radii on validation sets to empirically select the optimal value that minimizes out-of-sample loss while preserving robustness against distributional shifts. This combined approach ensures that DRL-Cox achieves strong generalization with high probability.

Meanwhile, it is also noteworthy that the convex program in Eq.~\eqref{tractableDRO} encompasses $O(N^2)$ constraints, a factor which significantly impedes computational efficiency. In practical applications, a more feasible approach involves retaining only the constraints associated with $s_i$ in Eq.~\eqref{tractableDRO} for $i \leq k < i+\gamma$, thereby reducing the overall constraint count to $O(\gamma N)$. This modification not only enhances computational tractability but also maintains the integrity of the optimization process within a manageable scope. We then proceed to the following experiments with $\gamma=3$. In our experiments, the average runtime per trial is approximately 30 seconds, indicating that while the method remains computationally demanding, it is still feasible for practical applications.

For the rest of this section, we test the model on two distinct groups of data with different contamination patterns.

\begin{figure*}[h]
  \centering
    \caption{Performance metrics under varying distributional shifts for different models.}
  \includegraphics[width=\textwidth]{./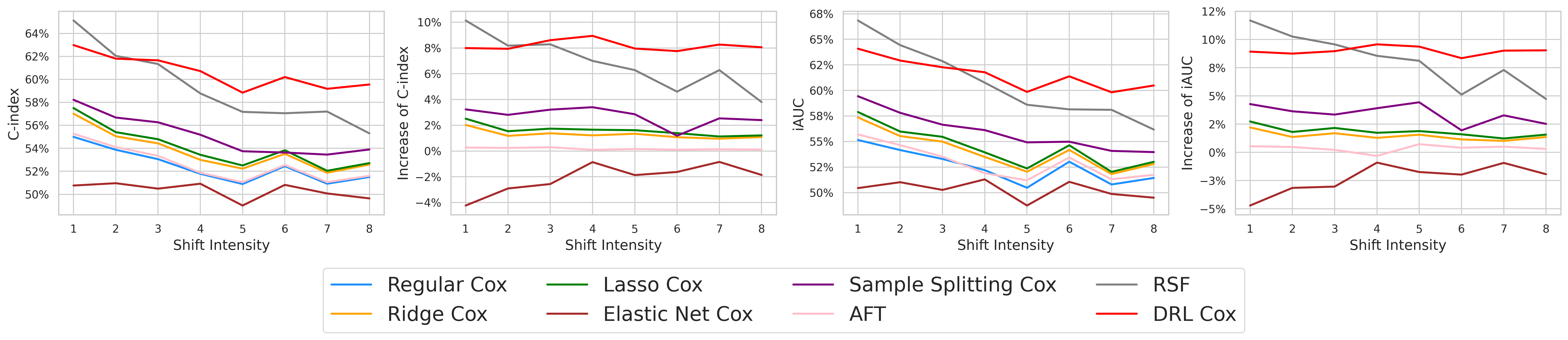}\\[1ex]
  \setlength\tabcolsep{0pt}
  \begin{tabular}{*{4}{p{0.24\textwidth}}}
    \centering (a) C-index 
    & \centering (b) C-index Increase Relative to Regular Cox
    & \centering (c) iAUC 
    & \centering (d) iAUC Increase Relative to Regular Cox \\
  \end{tabular}
  \label{fig:whas500_metrics}
\end{figure*}

\subsection{WHAS500 Dataset with Covariate Distributional Shift}
The WHAS500 dataset \citep{whas500} contains data on 500 patients hospitalized for acute myocardial infarction (heart attack) in Worcester, Massachusetts. It is widely used in survival analysis for time-to-event modeling, with a moderate censoring rate of 57\% and 14 features, ensuring balanced data utilization. The statistical summary of a subset of the features is presented in Table~\ref{tab:data_stats}.

We now introduce a distributional shift in its covariate. We define eight levels of shift intensity, progressively replacing 1 to 8 of these features with values sampled from a normal distribution. For each intensity level, we conduct 50 trials and compute the average C-index and iAUC to evaluate the model's performance under distributional shifts. We visualize the comparison of the two metrics in Figure~\ref{fig:whas500_metrics}. 

\begin{table*}[h]
\caption{Test scores for Miscarriage under different outlier severity levels with fixed outlier ratio (20\%). The best method for each outlier level is highlighted in bold.} \label{tab:sever_misc}
\centering
\footnotesize 
\rowcolors{2}{gray!15}{white} 
\resizebox{\textwidth}{!}{ 
\begin{tabular}{c|cc|cc|cc|cc|cc}
\toprule
  & \multicolumn{2}{c|}{Severity 1} & \multicolumn{2}{c|}{Severity 2} & \multicolumn{2}{c|}{Severity 3} & \multicolumn{2}{c|}{Severity 4} & \multicolumn{2}{c}{Severity 5} \\
 \hline
 Method & C-Index & iAUC & C-Index & iAUC & C-Index & iAUC & C-Index & iAUC & C-Index & iAUC \\
\hline
Regular Cox           & 0.5570 & 0.5727 & 0.5526 & 0.5675 & 0.5499 & 0.5657 & 0.5503 & 0.5644 & 0.5483 & 0.5646 \\
Ridge Cox             & 0.5622 & 0.5782 & 0.5607 & 0.5770 & 0.5568 & 0.5728 & 0.5595 & 0.5752 & 0.5599 & 0.5777 \\
Lasso Cox             & 0.5609 & 0.5713 & 0.5600 & 0.5650 & 0.5555 & 0.5637 & 0.5606 & 0.5598 & 0.5606 & 0.5609 \\
Elastic Net Cox       & 0.5623 & 0.5703 & \textbf{0.5657} & 0.5748 & 0.5614 & 0.5710 & 0.5618 & 0.5664 & 0.5634 & 0.5628 \\
Sample Splitting Cox  & 0.5328 & 0.5403 & 0.5359 & 0.5312 & 0.5159 & 0.5215 & 0.5085 & 0.5153 & 0.5077 & 0.5083 \\
AFT                   & 0.5565 & 0.5743 & 0.5511 & 0.5690 & 0.5483 & 0.5658 & 0.5461 & 0.5616 & 0.5464 & 0.5654 \\
RSF                   & 0.5587 & 0.5705 & 0.5545 & 0.5649 & 0.5618 & 0.5710 & 0.5662 & 0.5785 & 0.5626 & 0.5726 \\
DRL-Cox               & \textbf{0.5675} & \textbf{0.5819} & 0.5656 & \textbf{0.5812} & \textbf{0.5619} & \textbf{0.5770} & \textbf{0.5665} & \textbf{0.5808} & \textbf{0.5646} & \textbf{0.5803} \\
\bottomrule
\end{tabular}
} 
\end{table*}

\begin{table*}[h]
\caption{Test scores for Miscarriage under different outlier ratios, with fixed outlier severity (level 3). The best method for each outlier ratio is highlighted in bold.} \label{tab:ratio_misc}
\centering
\footnotesize 
\rowcolors{2}{gray!15}{white} 
\resizebox{\textwidth}{!}{ 
\begin{tabular}{c|cc|cc|cc|cc|cc|cc}
\toprule
 & \multicolumn{2}{c|}{5\% Outliers} & \multicolumn{2}{c|}{10\% Outliers} & \multicolumn{2}{c|}{15\% Outliers} & \multicolumn{2}{c|}{20\% Outliers} & \multicolumn{2}{c|}{25\% Outliers} & \multicolumn{2}{c}{30\% Outliers} \\
 \hline
 Method & C-Index & iAUC & C-Index & iAUC & C-Index & iAUC & C-Index & iAUC & C-Index & iAUC & C-Index & iAUC \\
\hline
Regular Cox           & 0.5613 & 0.5789 & 0.5553 & 0.5706 & 0.5544 & 0.5683 & 0.5580 & 0.5741 & 0.5521 & 0.5690 & 0.5540 & 0.5700 \\
Ridge Cox             & 0.5613 & 0.5770 & 0.5630 & 0.5788 & 0.5622 & 0.5771 & 0.5616 & 0.5779 & 0.5575 & 0.5751 & 0.5629 & 0.5805 \\
Lasso Cox             & 0.5614 & 0.5680 & 0.5612 & 0.5625 & 0.5609 & 0.5650 & 0.5620 & 0.5739 & 0.5578 & 0.5674 & 0.5613 & 0.5671 \\
Elastic Net Cox       & 0.5629 & 0.5717 & 0.5662 & 0.5764 & 0.5664 & 0.5768 & 0.5628 & 0.5719 & 0.5592 & 0.5693 & 0.5659 & 0.5724 \\
Sample Splitting Cox  & 0.5351 & 0.5395 & 0.5186 & 0.5276 & 0.5361 & 0.5479 & 0.5220 & 0.5290 & 0.5204 & 0.5343 & 0.5245 & 0.5265 \\
AFT                   & 0.5548 & 0.5712 & 0.5543 & 0.5710 & 0.5538 & 0.5690 & 0.5559 & 0.5732 & 0.5512 & 0.5712 & 0.5521 & 0.5706 \\
RSF                   & 0.5639 & 0.5742 & 0.5582 & 0.5677 & 0.5648 & 0.5759 & 0.5611 & 0.5700 & 0.5615 & 0.5707 & 0.5603 & 0.5700 \\
DRL-Cox               & \textbf{0.5687} & \textbf{0.5834} & \textbf{0.5679} & \textbf{0.5823} & \textbf{0.5677} & \textbf{0.5815} & \textbf{0.5684} & \textbf{0.5840} & \textbf{0.5639} & \textbf{0.5800} & \textbf{0.5675} & \textbf{0.5840} \\
\bottomrule
\end{tabular}
} 
\end{table*}

According to the results, DRL-Cox consistently outperforms all other models at high level of contamination, maintaining the highest C-index and iAUC values under the most intense shifts. The RSF model demonstrates moderate robustness, though it exhibits drastic decrease at intermediate and high shift levels. Meanwhile, the Sample-Splitting Cox model \citep{HuC22}, despite also incorporating a DRL framework, does not achieve robustness in this scenario. This outcome is expected, as its design primarily focuses on improving fairness in survival analysis \citep{HuC24} rather than optimizing predictive performance under distributional shift.

Moreover, it is also noteworthy that neither of the two test scores exhibit a strictly proportional relationship with distributional shift severity. Unlike traditional regression models where increasing noise typically leads to a consistent decline in performance, survival analysis methods such as the Cox model rely on pairwise comparisons of relative risk rather than absolute predictions. This structure makes them more resilient to certain types of distributional shifts, as long as the relative ranking of risk scores remains stable. In certain instances, models demonstrate a transient performance recovery at intermediate shift levels, implying that moderate degrees of distributional perturbation may introduce noise that remains within a tolerable range or even facilitates enhanced generalization. Specifically, if the introduced perturbations disrupt spurious correlations or regularize the model implicitly, they may help maintain or even improve predictive stability. This non-monotonic behavior underscores the intricate interplay between feature distributional shifts and model adaptability, suggesting that the impact of shift severity on predictive performance is highly dependent on the underlying data structure and is not governed by a simple linear trend.

\subsection{PRESTO Miscarriage Dataset with Outliers}

In addition to the public dataset, we incorporate real-world datasets from previous studies on predictive modeling of miscarriage \citep{pregnancy,miscarriage}. This dataset
is sourced from the {\em Pregnancy Study Online (PRESTO)}, a web-based preconception
cohort study by \citet{wise_main}. The Miscarriage dataset spans from 2013 to 2022
and involves 8,739 females aged 21--45 in the U.S. and Canada. This dataset comprises
189 predictive variables, and the censor rate is 79.64\%. We defined miscarriage as pregnancy loss before 20 completed
weeks of gestation.  In this dataset, the time to event was calculated as the
difference between the Exit week and the Start week (the gestational week at the time
of enrollment), using weeks as the time scale.

Given the high dimensionality and censoring rate of this dataset, training is inherently challenging and prone to the curse of dimensionality \citep{curse_dim}. Instead of introducing a distributional shift, we adopt a more conservative contamination approach by simulating outliers. Specifically, we contaminate the dataset by injecting outliers at varying ratios, ranging from 5\% to 30\%. To generate these outliers, we randomly select numerical features and perturb them with Gaussian noise of varying variance, thereby mimicking potential real-world perturbations encountered in survival analysis.

As we described above, our simulated noise depends on two parameters: the
proportion of outliers in the dataset (outlier ratios) and the degree of noise introduced
by the outliers (outlier severity). For each combination of these parameters, we implement five iterations and compare the averaged metrics for the benchmarks, illustrating how the
model's performance is affected under different levels of outliers contamination. The results are shown in Table~\ref{tab:sever_misc} and Table~\ref{tab:ratio_misc}.

Comparing to the distributional shifted dataset, the Miscarriage dataset with outliers presents a different dynamic. While all models
generally perform well, DRL-Cox exhibits a more pronounced improvement in C-index
compared to the baseline Cox model, especially in scenarios involving higher outlier
ratios and severity levels. The spikes in performance seen in the DRL-Cox model in these
conditions indicate its ability to maintain robust performance even in the face of
more irregular and complex data distributions, which are often present in
miscarriage-related datasets.

Overall, the above two experiments offer a direct visualization of the relationship
between varying levels of contamination and the model's predictive power, highlighting the
superior robustness of the DRL-Cox model compared to its competitors. The results demonstrate that under intense level of contamination, the DRL-Cox model consistently outperforms all other (robust) Cox models, as well as the AFT Model and RSF model, offering greater resilience and predictive
accuracy when handling real-world data irregularities, such as outliers and
perturbations, as seen in both datasets.

\section{Conclusion and Further Direction} \label{sec:conc}

In this article, we introduced a novel approach to enhance the robustness of Cox regression models through the application of DRL. By addressing inherent limitations in the standard Cox loss function, we have demonstrated the feasibility of deriving a more tractable form of the model. Our methodology, which leverages duality alongside the discretization of non-convex directions, culminates in a convex program that significantly enhances the model's robustness. The empirical evaluation reveals that the DRL-Cox model invariably outperforms other traditional survival methods.

In conclusion, this study's findings advocate for the integration of distributionally robust learning techniques in survival analysis, particularly in scenarios characterized by data uncertainty and noise. Despite the issue of computational complexity, the proposed DRL-Cox model represents an important step forward in developing more resilient and accurate predictive models. 

On the other hand, our approach inherently relies on the convexity of the individual loss function with respect to the $x$-direction, making direct extensions to deep survival models non-trivial. However, there exists a potential approach to incorporate DRL-based constraints into the weight matrices of deep survival models such as DeepHit \citep{deephit}, ensuring that the learned representations remain stable under perturbations. Such an approach has proven effective in various deep learning applications, including classification tasks and medical imaging \citep{dro_mlr,hao2023}, suggesting that DRL could also be integrated into deep survival learning frameworks. Future research could explore methods to adapt the DRL framework for deep survival models, potentially through structured approximations or alternative optimization techniques.

Additionally, further investigations into the bounds of ambiguity set adjustments and their implications on model performance across diverse datasets could yield deeper insights. Exploring alternative loss functions within the DRL framework may also lead to novel advancements in survival analysis and related fields. Through the advancements presented in this paper, we contribute to the broader discourse on the intersection of robust optimization and survival analysis, offering practical and theoretical insights that could pave the way for more sophisticated and resilient statistical models in medical research and beyond.

\bibliography{refs}

\begin{thebibliography}{33}
\providecommand{\natexlab}[1]{#1}
\providecommand{\url}[1]{\texttt{#1}}
\expandafter\ifx\csname urlstyle\endcsname\relax
  \providecommand{\doi}[1]{doi: #1}\else
  \providecommand{\doi}{doi: \begingroup \urlstyle{rm}\Url}\fi

\bibitem[Ben-Tal et~al.(2009)Ben-Tal, A., Ghaoui, El, and Nemirovski]{Ben-Tal2009}
Ben-Tal, A., Ghaoui, L.~El, and A.~Nemirovski.
\newblock \emph{Robust Optimization}.
\newblock Princeton University Press, 2009.

\bibitem[Chen and Paschalidis(2018)]{chen2018robust}
Ruidi Chen and Ioannis~Ch. Paschalidis.
\newblock A robust learning approach for regression models based on distributionally robust optimization.
\newblock \emph{Journal of Machine Learning Research}, 19\penalty0 (13), 2018.

\bibitem[Chen and Paschalidis(2020)]{chen2021}
Ruidi Chen and Ioannis~Ch. Paschalidis.
\newblock \emph{Distributionally Robust Learning}.
\newblock Now Publishers Inc., Hanover, MA, USA, 2020.
\newblock \doi{10.1561/2400000026}.

\bibitem[Chen et~al.()Chen, Hao, and Paschalidis]{dro_mlr}
Ruidi Chen, Boran Hao, and Ioannis~Ch. Paschalidis.
\newblock Distributionally robust multiclass classification and applications in deep image classifiers.
\newblock \emph{IEEE International Conference on Acoustics, Speech, and Signal Processing (ICASSP)}.
\newblock \doi{10.1109/ICASSP49357.2023.10095775}.

\bibitem[Cox(1972)]{Cox1972}
D.~R. Cox.
\newblock {Regression Models and Life-Tables}.
\newblock \emph{Journal of the Royal Statistical Society: Series B (Methodological)}, 34\penalty0 (2):\penalty0 187--202, 12 1972.
\newblock ISSN 0035-9246.
\newblock \doi{10.1111/j.2517-6161.1972.tb00899.x}.

\bibitem[Delage and Ye(2010)]{dro2}
Erick Delage and Yinyu Ye.
\newblock Distributionally robust optimization under moment uncertainty with application to data-driven problems.
\newblock \emph{Operations Research}, 58:\penalty0 595--612, 06 2010.
\newblock \doi{10.1287/opre.1090.0741}.

\bibitem[Donoho(2000)]{curse_dim}
David Donoho.
\newblock High-dimensional data analysis: The curses and blessings of dimensionality.
\newblock \emph{AMS Math Challenges Lecture}, pages 1--32, 01 2000.

\bibitem[Duchi and Namkoong(2021)]{Duchi2018}
John Duchi and Hongseok Namkoong.
\newblock Learning models with uniform performance via distributionally robust optimization.
\newblock \emph{The Annals of Statistics}, 49, 06 2021.
\newblock \doi{10.1214/20-AOS2004}.

\bibitem[Esfahani and Kuhn(2018)]{EsfahaniK18}
Peyman~Mohajerin Esfahani and Daniel Kuhn.
\newblock Data-driven distributionally robust optimization using the {Wasserstein} metric: performance guarantees and tractable reformulations.
\newblock \emph{Math. Program.}, 171\penalty0 (1-2):\penalty0 115--166, 2018.
\newblock \doi{10.1007/S10107-017-1172-1}.

\bibitem[Gao and Kleywegt(2023)]{gao2023}
Rui Gao and Anton~J. Kleywegt.
\newblock Distributionally robust stochastic optimization with {Wasserstein} distance.
\newblock \emph{Math. Oper. Res.}, 48\penalty0 (2):\penalty0 603--655, 2023.
\newblock \doi{10.1287/MOOR.2022.1275}.

\bibitem[Glasserman and Xu(2012)]{finance2}
Paul Glasserman and Xingbo Xu.
\newblock Robust risk measurement and model risk.
\newblock \emph{Available at SSRN 2167765}, 14, 1 2012.
\newblock \doi{10.2139/ssrn.2167765}.

\bibitem[Goh and Sim(2010)]{dro3}
Joel Goh and Melvyn Sim.
\newblock Distributionally robust optimization and its tractable approximations.
\newblock \emph{Operations Research}, 58:\penalty0 902--917, 08 2010.
\newblock \doi{10.1287/opre.1090.0795}.

\bibitem[Goldberg et~al.(2000)Goldberg, Yarzebski, Lessard, and Gore]{whas500}
Robert~J. Goldberg, Jorge Yarzebski, Darleen Lessard, and Joel~M. Gore.
\newblock Decade-long trends and factors associated with time to hospital presentation in patients with acute myocardial infarction: The {Worcester} heart attack study.
\newblock \emph{Archives of Internal Medicine}, 160\penalty0 (21):\penalty0 3217--3223, 11 2000.
\newblock ISSN 0003-9926.
\newblock \doi{10.1001/archinte.160.21.3217}.

\bibitem[Hao et~al.(2023)Hao, Shen, Chen, Farris, Anderson, Zhang, and Paschalidis]{hao2023}
Boran Hao, Guoyao Shen, Ruidi Chen, Chad~W. Farris, Stephan~W. Anderson, Xin Zhang, and Ioannis~Ch. Paschalidis.
\newblock Distributionally robust image classifiers for stroke diagnosis in accelerated {MRI}.
\newblock In \emph{Medical Image Computing and Computer Assisted Intervention -- MICCAI 2023}, pages 768--777, Cham, 2023.
\newblock ISBN 978-3-031-43904-9.

\bibitem[Harrell et~al.(1982)Harrell, Califf, Pryor, Lee, and Rosati]{cindex}
Jr~Harrell, Frank~E., Robert~M. Califf, David~B. Pryor, Kerry~L. Lee, and Robert~A. Rosati.
\newblock {Evaluating the Yield of Medical Tests}.
\newblock \emph{JAMA}, 247\penalty0 (18):\penalty0 2543--2546, 05 1982.
\newblock ISSN 0098-7484.
\newblock \doi{10.1001/jama.1982.03320430047030}.

\bibitem[Heagerty and Zheng(2005)]{iauc}
Patrick~J. Heagerty and Yingye Zheng.
\newblock Survival model predictive accuracy and {ROC} curves.
\newblock \emph{Biometrics}, 61\penalty0 (1):\penalty0 92--105, 2005.
\newblock \doi{10.1111/j.0006-341X.2005.030814.x}.

\bibitem[Hu and Chen(2022)]{HuC22}
Shu Hu and George~H. Chen.
\newblock Distributionally robust survival analysis: {A} novel fairness loss without demographics.
\newblock In \emph{Machine Learning for Health, {ML4H} 2022}, volume 193, pages 62--87, 2022.

\bibitem[Hu and Chen(2024)]{HuC24}
Shu Hu and George~H. Chen.
\newblock Fairness in survival analysis with distributionally robust optimization.
\newblock \emph{Journal of Machine Learning Research}, 25\penalty0 (246):\penalty0 1--85, 2024.

\bibitem[Ishwaran et~al.(2008)Ishwaran, Kogalur, Blackstone, and Lauer]{rsf}
Hemant Ishwaran, Udaya~B. Kogalur, Eugene~H. Blackstone, and Michael~S. Lauer.
\newblock {Random survival forests}.
\newblock \emph{The Annals of Applied Statistics}, 2\penalty0 (3):\penalty0 841 -- 860, 2008.
\newblock \doi{10.1214/08-AOAS169}.

\bibitem[Kalbfleisch and Prentice(2002)]{aft}
J.~D. Kalbfleisch and Ross~L. Prentice.
\newblock \emph{The statistical analysis of failure time data}.
\newblock Wiley series in probability and statistics. J. Wiley, Hoboken, N.J, 2nd ed edition, 2002.
\newblock ISBN 978-0-471-36357-6.

\bibitem[Lee et~al.(2018)Lee, Zame, Yoon, and van~der Schaar]{deephit}
Changhee Lee, William Zame, Jinsung Yoon, and Mihaela van~der Schaar.
\newblock Deephit: A deep learning approach to survival analysis with competing risks.
\newblock \emph{Proceedings of the AAAI Conference on Artificial Intelligence}, 32\penalty0 (1), Apr. 2018.
\newblock \doi{10.1609/aaai.v32i1.11842}.

\bibitem[Mo et~al.(2021)Mo, Weibin, Qi, and Liu]{healthcare2}
Mo, Weibin, Zhengling Qi, and Yufeng Liu.
\newblock Learning optimal distributionally robust individualized treatment rules.
\newblock \emph{Journal of the American Statistical Association}, 116\penalty0 (534):\penalty0 659--674, 2021.
\newblock \doi{10.1080/01621459.2020.1796359}.

\bibitem[Simon et~al.(2011)Simon, Friedman, Hastie, and Tibshirani]{elastic}
Noah~R. Simon, Jerome~H. Friedman, Trevor~J. Hastie, and Robert Tibshirani.
\newblock Regularization paths for {Cox's} proportional hazards model via coordinate descent.
\newblock \emph{Journal of statistical software}, 39 5:\penalty0 1--13, 2011.

\bibitem[Sotudian et~al.(2023)Sotudian, Chen, and Paschalidis]{DRL_ranking}
Shahabeddin Sotudian, Ruidi Chen, and Ioannis~Ch Paschalidis.
\newblock Distributionally robust learning-to-rank under the {Wasserstein} metric.
\newblock \emph{PLOS ONE}, 18\penalty0 (3):\penalty0 e0283574, Mar 2023.
\newblock \doi{10.1371/journal.pone.0283574}.

\bibitem[Tibshirani(1997)]{lassocox}
Robert Tibshirani.
\newblock The {LASSO} method for variable selection in the {Cox} model.
\newblock \emph{Statistics in medicine}, 16 4:\penalty0 385--95, 1997.

\bibitem[Verweij et~al.(1994)Verweij, M., Houwelingen, and C.]{pcox}
Verweij, Pierre~J. M., Van Houwelingen, and Hans C.
\newblock Penalized likelihood in {Cox} regression.
\newblock \emph{Statistics in Medicine}, 13\penalty0 (23-24):\penalty0 2427--2436, 1994.
\newblock \doi{10.1002/sim.4780132307}.

\bibitem[von Neumann(1928)]{minimax}
J.~von Neumann.
\newblock Zur theorie der gesellschaftsspiele.
\newblock \emph{Mathematische Annalen}, 100:\penalty0 295--320, 1928.

\bibitem[Wise et~al.(2015)Wise, Rothman, Mikkelsen, Stanford, Wesselink, McKinnon, Gruschow, Horgan, Wiley, Hahn, Sorensen, and Hatch]{wise_main}
Lauren~A. Wise, Kenneth~J. Rothman, Ellen~M. Mikkelsen, Joseph~B. Stanford, Amelia~K. Wesselink, Craig McKinnon, Siobhan~M. Gruschow, Casie~E. Horgan, Aleta~S. Wiley, Kristen~A. Hahn, Henrik~Toft Sorensen, and Elizabeth~E. Hatch.
\newblock Design and conduct of an internet-based preconception cohort study in north america: Pregnancy study online.
\newblock \emph{Paediatric and Perinatal Epidemiology}, 29\penalty0 (4):\penalty0 360--371, 2015.
\newblock \doi{10.1111/ppe.12201}.

\bibitem[Yland et~al.(2022)Yland, Wang, Taiyao, Zad, Willis, Wang, Wesselink, Jiang, Hatch, Wise, and Paschalidis]{pregnancy}
J.~Yland, Wang, Taiyao, Zahra Zad, Sydney~K Willis, Tanran~R Wang, Amelia~K Wesselink, Tammy Jiang, Elizabeth~E Hatch, Lauren~A Wise, and Ioannis~Ch Paschalidis.
\newblock {Predictive models of pregnancy based on data from a preconception cohort study}.
\newblock \emph{Human Reproduction}, 37\penalty0 (3):\penalty0 565--576, 01 2022.
\newblock ISSN 0268-1161.
\newblock \doi{10.1093/humrep/deab280}.

\bibitem[Yland et~al.(2024)Yland, Zad, Wang, Wesselink, Jiang, Hatch, Paschalidis, and Wise]{miscarriage}
J.~Yland, Zahra Zad, Tanran~R. Wang, Amelia~K. Wesselink, Tammy Jiang, Elizabeth~E. Hatch, Ioannis~Ch. Paschalidis, and Lauren~A. Wise.
\newblock Predictive models of miscarriage on the basis of data from a preconception cohort study.
\newblock \emph{Fertility and Sterility}, 122\penalty0 (1):\penalty0 140--149, 2024.
\newblock ISSN 0015-0282.
\newblock \doi{10.1016/j.fertnstert.2024.04.007}.

\bibitem[Zhang and Lu(2007)]{10.1093/biomet/asm037}
Hao~Helen Zhang and Wenbin Lu.
\newblock {Adaptive Lasso for Cox's proportional hazards model}.
\newblock \emph{Biometrika}, 94\penalty0 (3):\penalty0 691--703, 05 2007.
\newblock ISSN 0006-3444.
\newblock \doi{10.1093/biomet/asm037}.

\bibitem[Zhao and Guan(2015)]{zhao2015}
Chaoyue Zhao and Yongpei Guan.
\newblock Data-driven risk-averse two-stage stochastic program with $\zeta$-structure probability metrics.
\newblock \emph{Available on Optimization Online}, 2015.

\bibitem[Zhu and Fukushima(2009)]{finance1}
Shushang Zhu and Masao Fukushima.
\newblock Worst-case conditional value-at-risk with application to robust portfolio management.
\newblock \emph{Operations Research}, 57:\penalty0 1155--1168, 10 2009.
\newblock \doi{10.1287/opre.1080.0684}.

\end{thebibliography}

\end{document}